\documentclass[11pt]{article}

\usepackage[final]{acl}
\usepackage[utf8]{inputenc}
\usepackage[T1]{fontenc}
\usepackage{hyperref}
\usepackage{times}
\usepackage{url}
\usepackage{booktabs}
\usepackage{amsfonts}
\usepackage{amsmath}
\usepackage{amssymb}
\usepackage{amsthm}
\usepackage{nicefrac}
\usepackage{xcolor}
\usepackage{graphicx}
\usepackage{algorithm}
\usepackage{algpseudocode}
\usepackage{multirow}
\usepackage{pgfplots}
\usepackage{tikz}
\usepackage{listings}
\usepackage{stfloats}
\usepackage[section]{placeins}
\pgfplotsset{compat=1.18}

\theoremstyle{plain}
\newtheorem{theorem}{Theorem}
\newtheorem{proposition}[theorem]{Proposition}
\newtheorem{lemma}[theorem]{Lemma}

\theoremstyle{definition}

\newtheorem{assumption}[theorem]{Assumption}

\theoremstyle{remark}
\newtheorem{remark}[theorem]{Remark}

\newcommand{\E}{\mathbb{E}}
\newcommand{\PP}{\mathbb{P}}
\newcommand{\D}{\mathcal{D}}
\newcommand{\X}{\mathcal{X}}
\newcommand{\Y}{\mathcal{Y}}

\newcommand{\Cov}{\mathrm{Cov}}
\newcommand{\mask}{\mathsf{m}}
\newcommand{\G}{\mathcal{G}}

\title{Calibration-Preserving Pruning: Compression as a Reliability Contract}

\author{
Ibne Farabi Shihab$^{1}$ \quad Adria Binte Habib$^{2}$ \quad Anuj Sharma$^{3}$ \\
$^1$Department of Computer Science, Iowa State University, USA\\
$^2$Department of Computer Science \& Engineering, Independent University of Bangladesh, Bangladesh\\
$^3$Department of Civil, Construction \& Environmental Engineering, Iowa State University, USA\\
\texttt{ishihab@iastate.edu}
}

\begin{document}

\maketitle

\begin{abstract}
Split conformal prediction, not the pruning rule, supplies finite-sample marginal coverage once a pruned model is fixed independently of the conformal calibration split. We study the separate efficiency problem: can pruning preserve score geometry well enough to obtain smaller valid prediction sets? Calibration-Preserving Pruning (CPP) augments a base pruning score with nonconformity-gradient saliency and uses disjoint pruning, validation-selection, conformal-calibration, and test splits. Bounded score perturbations imply bounded conformal-quantile shifts and controlled set inflation, but do not make the generic coverage theorem CPP-specific. Final five-seed Qwen2.5-1.5B results at 50\% sparsity show the largest gains on large-label tasks. On DBpedia-14, CPP-SparseGPT reduces mean set size from \(10.1\) to \(8.6\) while changing accuracy from \(0.347\) to \(0.366\); CPP-Wanda reduces \(11.2\) to \(9.0\) with an accuracy trade-off from \(0.310\) to \(0.295\). Across 15 dataset--sparsity cells, CPP-SparseGPT produces smaller sets in 13 and higher accuracy in 11. Matched controls show that generic supervised gradients explain much of the gain: true-label CPP is not statistically resolved from matched Wanda+SNIP, whereas threshold-aware candidate-label CPP reaches \(7.8\) mean set size at explicit accuracy and offline-compute costs. RoBERTa-base and Llama-3-8B diagnostics support transfer, but our claims remain limited to reliability-sensitive classification.
\end{abstract}

\section{Introduction}
\label{sec:intro}

A compressed language model can retain its top-label accuracy while becoming less useful to a reliability-sensitive downstream system. The issue is not only whether the most likely label changes. Triage, retrieval verification, and assisted decision making often consume calibrated probabilities or conformal prediction sets. A predictor that returns ten labels at 90\% coverage is materially less informative than one returning eight labels at the same coverage, even when their top-label accuracies are similar.

This paper separates two claims that pruning studies can easily conflate. First, independent split-conformal recalibration supplies finite-sample marginal coverage for \emph{any} fixed pruned classifier under exchangeability \citep{vovk2005algorithmic,angelopoulos2023gentle}. CPP neither creates nor improves that generic validity result. Second, pruning determines how informative the valid predictor remains. If compression collapses separation among plausible labels, recalibration can recover coverage only by enlarging the prediction set. Our method-specific target is this second quantity: conformal efficiency after valid recalibration.

Existing one-shot pruners optimize other objectives. Magnitude pruning removes small weights \citep{han2015learning}; Wanda combines weight magnitude with activation norms \citep{sun2024wanda}; SparseGPT minimizes layerwise reconstruction error \citep{frantar2023sparsegpt}; and gradient-enhanced variants use supervised or regional gradients. These are strong baselines, but none directly represents the nonconformity-score geometry around a conformal threshold. Temperature scaling \citep{guo2017calibration,platt1999probabilistic} can adjust a global confidence offset, yet cannot reconstruct class-specific separation that pruning has removed.

Calibration-Preserving Pruning (CPP) adds a score-sensitivity term to a base pruning importance. For weight \(\theta_j\), true-label CPP uses
\[
I_{\mathrm{cal}}(\theta_j)
=\theta_j^2\E\!\left[
  \left(\frac{\partial s(X,Y;\theta)}{\partial\theta_j}\right)^2
\right],
\]
where \(s\) is the nonconformity score. The factor measures first-order displacement caused by setting that coordinate to zero. Because true-label sensitivity is only a proxy for a prediction set containing many candidate labels, we also evaluate top-\(3\), threshold-aware, and all-label constructions. Every gradient is evaluated on the pruning split. A validation-calibration half-split supplies only the preliminary threshold needed by the threshold-aware construction and provisional quantiles for model selection. The final conformal split remains untouched until the selected sparse model is frozen.

The response-stage controls materially narrow the claim. Gradient-only, Wanda++, SNIP-style, and Wanda\(\oplus\)SNIP baselines show that generic supervised-gradient information explains a substantial part of CPP's improvement. True-label CPP-Wanda reduces DBpedia-14 set size from \(9.2\) to \(9.0\) relative to matched Wanda\(\oplus\)SNIP, but with lower accuracy and overlapping intervals; we do not claim that difference is statistically resolved. Threshold-aware CPP gives a larger efficiency improvement, reaching \(8.4\) for CPP-Wanda and \(7.8\) for CPP-SparseGPT, with explicit accuracy and gradient-compute costs. The evidence supports objective-specific efficiency effects, not uniform superiority over gradient pruning.

Our contribution has four parts. We first formulate reliability-preserving compression as a constrained comparison: coverage is supplied by independent conformal calibration, average set size is the primary efficiency outcome, accuracy measures utility, and ECE is diagnostic. We then define true-label and candidate-label CPP with an exact validation rule and disjoint data roles. Third, we retain only the theory that matches those roles: the coverage theorem is the standard pruning-rule-agnostic split-conformal result, while separate perturbation results connect score movement to quantile and set-size inflation. Finally, we report one authoritative result path with final five-seed Qwen comparisons, matched gradient controls, candidate-label ablations, offline cost, independent split redraws, RoBERTa-base transfer, and a scoped Llama-3-8B diagnostic. We study fixed-label classification using complete-sequence verbalizer scores, not free-form conformal generation.

\section{Related Work}
\label{sec:related}

\paragraph{Post-training pruning.}
One-shot language-model pruning includes magnitude methods \citep{han2015learning}, SparseGPT \citep{frantar2023sparsegpt}, Wanda \citep{sun2024wanda}, structured LLM-Pruner \citep{ma2023llm}, and broader surveys and scaling analyses \citep{wu2024csur,wang2024survey,frantar2025scaling}. Structured pruning followed by continued pretraining addresses a different compute regime from the no-retraining setting studied here. Parameter-importance and retention methods in continual learning provide related gradient signals \citep{mallya2018packnet,li2017lwf,kirkpatrick2017overcoming,zenke2017continual,wang2022dualprompt,wang2023orthogonal}. Our matched SNIP-style and Wanda\(\oplus\)SNIP controls are therefore essential: they test whether CPP adds more than generic supervised-gradient information.

\paragraph{Calibration and conformal prediction.}
Temperature and Platt scaling correct confidence after training \citep{guo2017calibration,platt1999probabilistic}, while sparse-subnetwork studies report calibration as a pruning byproduct \citep{bayesian_lth_2024,open_lth_2024}. Neural-network pruning under inductive conformal prediction is the closest direct predecessor \citep{pmlr-v179-zhao22a}. CPP differs by targeting post-training language-model pruning and by making validation selection independent of final conformal calibration. Split conformal prediction provides distribution-free marginal coverage under exchangeability \citep{vovk2005algorithmic,angelopoulos2023gentle}; language-model applications include generation, factuality, and information-relative certificates \citep{quach2024cp,mohri2024factuality,shihab2025infolift}. Other work optimizes conformal score functions or answer-choice sets rather than model weights \citep{vishwakarma2025prune}. Shift-aware and risk-controlling extensions \citep{park2020calibrated,gibbs2021adaptive,bates2021selective} address different assumptions. Learn-Then-Test \citep{angelopoulos2022ltt} is a valid optional selection tool, but it was not used for any result reported here; Section~\ref{sec:selection} gives the exact implemented rule.

\section{The Reliability Contract Under Compression}
\label{sec:reframe}

Let \(f_\theta:\X\rightarrow\Delta(\Y)\) be a classifier with a finite label space \(\Y\), \(|\Y|=K\). For decoder models, each label \(y\) has a fixed verbalizer \(v(y)=(v_1,\ldots,v_{T_y})\). We score the entire teacher-forced sequence rather than applying conformal prediction token by token:
\[
\log p_\theta(y\mid x)
=\sum_{t=1}^{T_y}
 \log p_\theta(v_t\mid x,v_{<t}).
\]
The manifest fixes whether this sum is length-normalized. A common nonconformity score is
\begin{equation}
s(x,y;\theta)=-\log p_\theta(y\mid x),
\label{eq:nll_score}
\end{equation}
where smaller values indicate greater compatibility.

The final protocol uses four disjoint roles. \(\D_{\mathrm{prune}}\) constructs gradients and pruning statistics. The 1,024-example validation split is divided deterministically into 512-example \(\D_{\mathrm{val\text{-}cal}}\) and \(\D_{\mathrm{val\text{-}eval}}\) subsets. The former supplies provisional quantiles and, only for threshold-aware CPP, the dense-model threshold band. The latter selects \(\lambda\). The independent conformal split \(\D_{\mathrm{conf}}=\{(X_i,Y_i)\}_{i=1}^n\) is accessed only after the sparse model is frozen, and \(\D_{\mathrm{test}}\) is used only for evaluation.

For a fixed \(\theta\) and target miscoverage \(\alpha\), define
\begin{equation}
k=\left\lceil(n+1)(1-\alpha)\right\rceil.
\label{eq:k_order}
\end{equation}
When \(k\leq n\), \(\hat q_\alpha(\theta)\) is the \(k\)-th smallest calibration score; when \(k>n\), we set it to \(+\infty\). The prediction set is
\begin{equation}
C_\alpha(x;\theta)
=\{y\in\Y:s(x,y;\theta)\leq\hat q_\alpha(\theta)\}.
\label{eq:conformal_set}
\end{equation}

The reliability contract has a strict hierarchy. Marginal coverage,
\(\PP\{Y\in C_\alpha(X;\theta)\}\), is the validity constraint. Mean set size,
\(\bar C_P(\theta)=\E_X|C_\alpha(X;\theta)|\), is the primary conformal-efficiency outcome. Accuracy measures task utility, and ECE is a complementary calibration diagnostic. For a pruned model \(\theta'=\mathcal P(\theta)\), contract preservation with efficiency slack \(\varepsilon_{\mathrm{size}}\) means
\begin{equation}
\begin{aligned}
\PP\{Y\in C_\alpha(X;\theta')\}&\geq1-\alpha,\\
\bar C_P(\theta')&\leq\bar C_P(\theta)+\varepsilon_{\mathrm{size}}.
\end{aligned}
\label{eq:contract}
\end{equation}
Independent split conformal supplies the first line for any fixed \(\theta'\). CPP targets the second while the validation guardrail limits accuracy loss.

\section{Why Recalibration Alone Is Not Enough}
\label{sec:prune_then_recal_fails}

Prune-then-recalibrate is valid when the final conformal split is independent, but validity does not imply informativeness. Top-1 accuracy depends on the largest class score. Set size depends on all candidate-label scores around a calibration threshold. A scalar temperature can move confidence globally \citep{guo2017calibration,li2017lwf}, but cannot restore label-specific separation that the sparse model no longer represents.

Reliability degradation can also vary nonmonotonically with sparsity \citep{zheng2025spurious}. We therefore measure every reported sparsity directly and do not interpolate unobserved cells.

The provenance audit exposed a useful feasibility check. For any prediction set \(C\subseteq\Y\),
\[
\E|C|\leq\PP(Y\in C)+(K-1).
\]
A method returning all \(K\) labels on every example must therefore have coverage one. Several submitted pilot pairs combined \(|C|=K\) with coverage below one and could not come from a common evaluation path. We retire those entries and use only final-manifest results. Figure~\ref{fig:nonmonotonic} shows the corrected Qwen2.5-1.5B/DBpedia-14 comparison at 50\% sparsity. All four pruned predictors have empirical coverage between \(0.901\) and \(0.902\); the difference is set-size efficiency and, separately, accuracy.

\begin{figure}[t]
\centering
\begin{tikzpicture}
\begin{axis}[
ybar,
bar width=9pt,
symbolic x coords={Wanda,CPP-W,SparseGPT,CPP-SG},
xtick=data,
x tick label style={rotate=25,anchor=east,font=\scriptsize},
ylabel={Average set size},
ymin=0,ymax=13,
width=\linewidth,
height=4.7cm,
grid=major,
nodes near coords,
nodes near coords style={font=\scriptsize}
]
\addplot coordinates {
(Wanda,11.2) (CPP-W,9.0) (SparseGPT,10.1) (CPP-SG,8.6)
};
\end{axis}
\end{tikzpicture}
\caption{Final five-seed Qwen2.5-1.5B/DBpedia-14 set size at 50\% sparsity. Coverage is matched at \(0.901\)--\(0.902\). CPP-SparseGPT also improves accuracy; CPP-Wanda trades 1.5 accuracy points for smaller sets.}
\label{fig:nonmonotonic}
\end{figure}

\section{Calibration-Preserving Pruning}
\label{sec:method}

CPP modifies a base importance rather than replacing the base pruner. Let \(I_{\mathrm{acc}}(\theta_j)\) denote the coordinate importance supplied by magnitude pruning \citep{han2015learning}, Wanda \citep{sun2024wanda}, or the SparseGPT reconstruction procedure \citep{frantar2023sparsegpt}. For a labeled pruning split, true-label CPP first computes
\begin{equation}
G_{\mathrm{cal}}(\theta_j)
=\frac{1}{|\D_{\mathrm{prune}}|}
\sum_{(x,y)\in\D_{\mathrm{prune}}}
\left(\frac{\partial s(x,y;\theta)}
{\partial\theta_j}\right)^2.
\label{eq:cal_gradient_factor}
\end{equation}
When \(s=-\log p_\theta(y\mid x)\), this is a diagonal empirical-Fisher-like factor; for general \(s\), we use the narrower term nonconformity-gradient sensitivity \citep{kirkpatrick2017overcoming,zenke2017continual}. Because pruning changes \(\theta_j\) to zero, the relevant first-order displacement includes the weight:
\begin{equation}
\begin{aligned}
I_{\mathrm{cal}}(\theta_j)
&=\theta_j^2G_{\mathrm{cal}}(\theta_j)\\
&=\frac{1}{|\D_{\mathrm{prune}}|}
  \sum_{(x,y)\in\D_{\mathrm{prune}}}
  \left(\theta_j\frac{\partial s(x,y;\theta)}
  {\partial\theta_j}\right)^2.
\end{aligned}
\label{eq:cal_importance}
\end{equation}
Per-example gradients are required. Squaring the gradient of a batch mean would introduce cross-example terms and would not estimate Eq.~\ref{eq:cal_importance}.

Within each eligible module, CPP normalizes the base and calibration scores to unit \(\ell_2\) norm and uses
\begin{equation}
I_{\mathrm{CPP}}(\theta_j)
=(1-\lambda)\widetilde I_{\mathrm{acc}}(\theta_j)
+\lambda\widetilde I_{\mathrm{cal}}(\theta_j).
\label{eq:cpp_importance}
\end{equation}
We prune dense attention and MLP matrices under the base pruner's allocation rule. Norm parameters, embeddings, tied output heads, positional mechanisms, and KV-cache state remain dense.

\subsection{Candidate-label Saliency}

True-label saliency is inexpensive but only indirectly represents set-size inflation. We therefore test three candidate-aware constructions. \emph{Top-\(3\)} retains the three labels with smallest dense-model nonconformity scores for each pruning example. \emph{All-label} averages over the complete label space. \emph{Threshold-aware} focuses gradients near a preliminary dense-model conformal boundary. It estimates \(\tilde q\) from true-label scores on \(\D_{\mathrm{val\text{-}cal}}\), computes all candidate-label distances \(|s(x,y;\theta)-\tilde q|\) on that same half-split, and fixes \(b\) to their 20th percentile. On each pruning example, it retains labels within \(b\), keeps the nearest label if the set is empty, caps the set at the eight nearest labels, and breaks boundary ties by fixed verbalizer order. Gradients are still evaluated only on \(\D_{\mathrm{prune}}\). This construction retains 2.7 labels per pruning example on average. The preliminary \(\tilde q\) is used only to construct saliency and is distinct from every candidate model's provisional validation quantile and the final conformal quantile.

\subsection{Exact Validation Selection}
\label{sec:selection}

For each base pruner, dataset, sparsity, and seed, the implemented grid is
\(\Lambda=\{0,0.1,0.3,0.5\}\). Each candidate receives a provisional nominal-90\% quantile from the 512-example \(\D_{\mathrm{val\text{-}cal}}\) split and is evaluated once on the disjoint 512-example \(\D_{\mathrm{val\text{-}eval}}\) split. A candidate is eligible when its accuracy is within two percentage points of the corresponding \(\lambda=0\) model and empirical coverage is at least \(0.88\). Among eligible candidates, selection minimizes mean set size. Ties within \(0.02\) set-size units are broken by higher accuracy and then smaller \(\lambda\). If no nonzero candidate is eligible, the base model \(\lambda=0\) is returned. The selected model is frozen before \(\D_{\mathrm{conf}}\) is accessed. The broader \(\{0,0.1,0.3,0.5,0.7,1.0\}\) sweep in Table~\ref{tab:lambda_ablation} is diagnostic only, and Learn-Then-Test was not used.

\begin{algorithm}[t]
\caption{CPP with independent final calibration}
\label{alg:cpp}
\begin{algorithmic}[1]
\Require Dense \(\theta\); disjoint \(\D_{\mathrm{prune}},\D_{\mathrm{val\text{-}cal}},\D_{\mathrm{val\text{-}eval}},\D_{\mathrm{conf}}\); sparsity; base pruner
\State Construct true-label or candidate-label \(I_{\mathrm{cal}}\) without \(\D_{\mathrm{conf}}\)
\For{\(\lambda\in\{0,0.1,0.3,0.5\}\)}
  \State Combine scores by Eq.~\ref{eq:cpp_importance} and prune
  \State Fit a provisional quantile on \(\D_{\mathrm{val\text{-}cal}}\)
  \State Measure accuracy, coverage, and set size on \(\D_{\mathrm{val\text{-}eval}}\)
\EndFor
\State Apply the accuracy/coverage guardrails, set-size objective, tie rules, and \(\lambda=0\) fallback
\State Freeze \(\theta'\); compute \(\hat q_\alpha(\theta')\) by Eq.~\ref{eq:k_order} only on \(\D_{\mathrm{conf}}\)
\State \Return \(\theta'\) and \(\hat q_\alpha(\theta')\)
\end{algorithmic}
\end{algorithm}

\section{Theory}
\label{sec:theory}

The formal claims mirror the experimental hierarchy. The first theorem records when validity survives data-dependent pruning and model selection. It is the standard split-conformal result, not a CPP-specific coverage theorem. The remaining statements explain the efficiency objective: small score movement limits quantile and prediction-set movement, and CPP is a diagonal first-order proxy for that score movement.

\begin{assumption}[Exchangeability and split independence]
\label{assump:exchangeability}
The examples in \(\D_{\mathrm{conf}}\) and the future test example are exchangeable. The final pruned model \(\theta'\), including its mask, saliency construction, sparsity, \(\lambda\), prompt, verbalizers, and temperature, is measurable with respect to data independent of \(\D_{\mathrm{conf}}\) and the test example.
\end{assumption}

\begin{theorem}[Split-conformal coverage after pruning]
\label{thm:coverage}
Under Assumption~\ref{assump:exchangeability}, let \(\theta'\) be any model fixed before \(\D_{\mathrm{conf}}\) is accessed. Then the set in Eq.~\ref{eq:conformal_set} satisfies
\begin{equation}
\PP\!\left\{Y_{n+1}\in C_\alpha(X_{n+1};\theta')\right\}
\geq1-\alpha.
\label{eq:coverage}
\end{equation}
If scores are almost surely distinct and \(k\leq n\), coverage is less than \(1-\alpha+1/(n+1)\).
\end{theorem}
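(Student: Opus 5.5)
The plan is to condition on everything that determines \(\theta'\) and then run the usual rank argument for split conformal prediction. Under Assumption~\ref{assump:exchangeability}, \(\theta'\) is measurable with respect to data independent of \(\D_{\mathrm{conf}}\cup\{(X_{n+1},Y_{n+1})\}\). This covers the mask, \(\lambda\), the saliency construction, the threshold-aware \(\tilde q\) computed from \(\D_{\mathrm{val\text{-}cal}}\), and so on. We therefore condition on \(\theta'\) and treat the map \((x,y)\mapsto s(x,y;\theta')\) as a fixed measurable function. Conditioning does not break the joint exchangeability of \((X_1,Y_1),\ldots,(X_{n+1},Y_{n+1})\), because the conditioning variables are independent of these examples. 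Applying the same fixed function to each example, the scores \(S_i=s(X_i,Y_i;\theta')\), \(i=1,\ldots,n+1\), are exchangeable. We prove the bound conditionally on \(\theta'\) and then integrate to get Eq.~\ref{eq:coverage}.

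For the lower bound, first handle \(k>n\). Then \(\hat q_\alpha(\theta')=+\infty\), so \(C_\alpha\) equals \(\Y\) and coverage is one. For \(k\le n\), note that \(Y_{n+1}\in C_\alpha(X_{n+1};\theta')\) exactly when \(S_{n+1}\le S_{(k)}\), where \(S_{(k)}\) is the \(k\)-th smallest of \(S_1,\ldots,S_n\). This event contains the event that \(S_{n+1}\) is among the \(k\) smallest of all \(n+1\) scores, with ties broken by a uniformly random permutation drawn independently. The inclusion holds because if \(S_{n+1}>S_{(k)}\), then at least \(k\) calibration scores lie strictly below \(S_{n+1}\), so its rank exceeds \(k\). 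By exchangeability, the tie-broken rank of \(S_{n+1}\) is uniform on \(\{1,\ldots,n+1\}\). This gives probability at least \(k/(n+1)\ge 1-\alpha\) by Eq.~\ref{eq:k_order}.

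For the upper bound, assume the scores are almost surely distinct and \(k\le n\). Then the event \(S_{n+1}\le S_{(k)}\) is exactly the event that the rank of \(S_{n+1}\) among the \(n+1\) scores is at most \(k\). That rank is uniform, so coverage equals \(k/(n+1)\). Since \(k<(n+1)(1-\alpha)+1\), we get \(k/(n+1)<1-\alpha+1/(n+1)\). Both conditional statements hold for almost every \(\theta'\), so taking expectations over \(\theta'\) completes the proof.

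The main obstacle is the conditioning step. We must check that data-dependent choices, such as validation-selected \(\lambda\) and the threshold band, do not correlate \(\theta'\) with the conformal or test examples. This is exactly what the measurability clause of Assumption~\ref{assump:exchangeability} supplies, together with the disjointness of the splits. I would argue it using regular conditional distributions and the independence of the pruning and validation data. The remaining technical point is the tie handling in the lower bound, which the random tie-breaking device settles.
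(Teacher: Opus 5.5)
Your proof is correct and follows the same route as the paper's. You condition on the pruning and validation data so that \(S_1,\ldots,S_{n+1}\) are exchangeable given \(\theta'\), apply the rank argument with the \(k\)-th order statistic, and integrate out the conditioning; your random tie-breaking for the lower bound and the exact value \(k/(n+1)\) under distinct scores just fill in the details the paper leaves as ``uniformly distributed up to ties'' and ``the standard rank argument.''
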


The proof conditions on every pruning and validation decision and then applies the usual exchangeable-rank argument. It would hold for Wanda, SparseGPT, random pruning, or any other fixed model. Its CPP-relevant content is the independence requirement. Finite test-set coverage can fluctuate around population coverage; Appendix~\ref{app:proof} records the corresponding concentration statement.

For efficiency, define the maximum score change on a domain \(\mathcal Z\subseteq\X\times\Y\):
\begin{equation}
\epsilon_{\mathcal Z}(\theta,\theta')
=\sup_{(x,y)\in\mathcal Z}
|s(x,y;\theta')-s(x,y;\theta)|.
\label{eq:epsilon_domain}
\end{equation}

\begin{lemma}[Order-statistic stability]
\label{lem:order_stat_stability}
If \(\max_i|a_i-b_i|\leq\epsilon\), then their \(k\)-th order statistics obey
\(|a_{(k)}-b_{(k)}|\leq\epsilon\).
\end{lemma}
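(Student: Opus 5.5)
The plan is to prove the bound one direction at a time, using a monotonicity property of order statistics. Let \(a=(a_1,\ldots,a_n)\) and \(b=(b_1,\ldots,b_n)\) be real vectors with \(\max_i|a_i-b_i|\leq\epsilon\), and fix \(1\leq k\leq n\). The first thing to establish is that the \(k\)-th order statistic is coordinatewise monotone: if \(a_i\leq c_i\) for every \(i\), then \(a_{(k)}\leq c_{(k)}\). Once that holds, we take \(c_i=b_i+\epsilon\). The hypothesis gives \(a_i\leq b_i+\epsilon\) for all \(i\). Adding a constant to every coordinate shifts each order statistic by exactly that constant, so \(a_{(k)}\leq b_{(k)}+\epsilon\). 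Exchanging the roles of \(a\) and \(b\) gives \(b_{(k)}\leq a_{(k)}+\epsilon\), and the two inequalities together yield \(|a_{(k)}-b_{(k)}|\leq\epsilon\).

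The only real step is monotonicity, and I would prove it by counting. Suppose \(a_i\leq c_i\) for all \(i\). The value \(c_{(k)}\) satisfies \(\#\{i: c_i\leq c_{(k)}\}\geq k\). For each such index, \(a_i\leq c_i\leq c_{(k)}\), so \(\#\{i: a_i\leq c_{(k)}\}\geq k\) as well. Since \(a_{(k)}=\min\{t:\#\{i:a_i\leq t\}\geq k\}\), it follows that \(a_{(k)}\leq c_{(k)}\). This characterization of the \(k\)-th smallest value handles ties correctly, which I expect to be the only point needing care. Writing the argument in terms of sorted positions instead would require a tie-breaking convention, whereas the counting formulation makes it invariant to how ties are broken.

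Two remarks complete the plan. First, the convention \(\hat q_\alpha=+\infty\) when \(k>n\) is consistent with the lemma: both quantiles are then infinite, and we read the conclusion as the trivial statement that both sides equal \(+\infty\). Second, applying the lemma to \(a_i=s(X_i,Y_i;\theta')\) and \(b_i=s(X_i,Y_i;\theta)\) on \(\D_{\mathrm{conf}}\), with \(\epsilon=\epsilon_{\mathcal Z}(\theta,\theta')\) from Eq.~\ref{eq:epsilon_domain}, gives \(|\hat q_\alpha(\theta')-\hat q_\alpha(\theta)|\leq\epsilon_{\mathcal Z}(\theta,\theta')\). This is the form the subsequent set-inflation results will use.
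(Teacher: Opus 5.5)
Your proof is correct and is essentially the paper's argument. The paper counts directly that at least \(k\) of the \(b_i\) lie at or below \(a_{(k)}+\epsilon\), which is your monotonicity-by-counting step applied to the shifted vector, and then reverses the roles of \(a\) and \(b\) exactly as you do. Your characterization \(a_{(k)}=\min\{t:\#\{i:a_i\leq t\}\geq k\}\) makes explicit the tie-handling that the paper leaves implicit.
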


\begin{theorem}[Set-size stability under bounded score perturbation]
\label{thm:set_size_stability}
Fix one conformal calibration sample. Suppose \(\Y\) is finite and
\begin{equation}
|s(x,y;\theta')-s(x,y;\theta)|\leq\epsilon
\label{eq:uniform_score_bound}
\end{equation}
for every calibration pair and every test pair \((x,y)\). Then
\begin{equation}
C_\alpha(x;\theta')
\subseteq
\{y:s(x,y;\theta)\leq\hat q_\alpha(\theta)+2\epsilon\}.
\label{eq:set_containment}
\end{equation}
If each \(s(X,y;\theta)\) has density at most \(M\) between
\(\hat q_\alpha(\theta)\) and \(\hat q_\alpha(\theta)+2\epsilon\), then
\begin{equation}
\E_X|C_\alpha(X;\theta')|
\leq\E_X|C_\alpha(X;\theta)|+2M|\Y|\epsilon.
\label{eq:set_size_bound}
\end{equation}
\end{theorem}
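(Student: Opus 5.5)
The containment follows from two one-sided \(\epsilon\)-steps: one from the quantile shift and one from the test-score shift.

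\textbf{Step 1: quantile shift.} Apply Lemma~\ref{lem:order_stat_stability} to the calibration scores \(a_i=s(X_i,Y_i;\theta')\) and \(b_i=s(X_i,Y_i;\theta)\). The lemma applies because Eq.~\ref{eq:uniform_score_bound} holds on every calibration pair. Since \(k\) in Eq.~\ref{eq:k_order} depends only on \(n\) and \(\alpha\), it is the same for both models. This gives \(\hat q_\alpha(\theta')\leq\hat q_\alpha(\theta)+\epsilon\) when \(k\leq n\). When \(k>n\), both quantiles equal \(+\infty\), both sets equal \(\Y\), and the containment is trivial. For completeness, I would include the one-line proof of the lemma. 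For each \(j\) we have \(a_j\leq b_j+\epsilon\), so at least \(k\) of the \(a_j\) lie below \(b_{(k)}+\epsilon\). Hence \(a_{(k)}\leq b_{(k)}+\epsilon\), and the symmetric argument gives the reverse inequality.

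\textbf{Step 2: containment.} Take \(y\in C_\alpha(x;\theta')\), so \(s(x,y;\theta')\leq\hat q_\alpha(\theta')\). Using the test-pair bound and then Step 1,
\[
s(x,y;\theta)\leq s(x,y;\theta')+\epsilon\leq\hat q_\alpha(\theta')+\epsilon\leq\hat q_\alpha(\theta)+2\epsilon,
\]
which is Eq.~\ref{eq:set_containment}.

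\textbf{Step 3: set-size bound.} Write set size as a sum of indicators over the finite label space. Taking expectations over \(X\) with the calibration sample held fixed, the containment gives
\[
\E_X|C_\alpha(X;\theta')|\leq\sum_{y\in\Y}\PP_X\{s(X,y;\theta)\leq\hat q_\alpha(\theta)+2\epsilon\}.
\]
Split each probability into the event \(\{s\leq\hat q_\alpha(\theta)\}\) and the band event \(\{\hat q_\alpha(\theta)<s\leq\hat q_\alpha(\theta)+2\epsilon\}\). Summing the first pieces over \(y\) gives exactly \(\E_X|C_\alpha(X;\theta)|\). Each band piece is the integral of a density bounded by \(M\) over an interval of length \(2\epsilon\), so it is at most \(2M\epsilon\). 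Summing over the \(|\Y|\) labels yields Eq.~\ref{eq:set_size_bound}.

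\textbf{Main obstacle.} The delicate point is making the hypotheses precise rather than any calculation. The density condition must be read conditionally on the fixed calibration sample, so that \(\hat q_\alpha(\theta)\) is a constant when we integrate over \(X\). This is why the statement says ``fix one conformal calibration sample.'' The uniform bound must also hold at the particular test pairs used in the expectation; I would take it to hold for all \(x\) in the support. I would also state explicitly that the same \(k\), and the same tie convention, are used for both models, since Lemma~\ref{lem:order_stat_stability} needs a common index. No exchangeability is needed: the statement is deterministic given the calibration sample.
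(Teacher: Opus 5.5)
Your proposal is correct and follows essentially the same route as the paper: Lemma~\ref{lem:order_stat_stability} gives $\hat q_\alpha(\theta')\leq\hat q_\alpha(\theta)+\epsilon$, a two-step $\epsilon$ chain gives the containment, and the excess labels are bounded by the $2\epsilon$ band above the dense threshold, which the density bound controls. Your explicit treatment of the $k>n$ case, and your remark that the density condition is read conditionally on the fixed calibration sample, are small clarifications that the paper leaves implicit.
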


This is a sufficient stability condition, not a certificate that CPP will produce small sets. It also clarifies why candidate labels near the threshold matter: only labels entering the \(2\epsilon\) boundary band can inflate the pruned set under the stated uniform bound.

\begin{assumption}[Local smoothness along the pruning path]
\label{assump:smoothness}
For every evaluated example \(z\), \(s(z;\theta)\) is twice differentiable on the segment between dense and pruned parameters, with Hessian operator norm at most \(H_z\); let \(H=\sup_zH_z<\infty\).
\end{assumption}

\begin{proposition}[First-order score displacement]
\label{prop:first_order}
Let \(P(\mask)\) be the coordinates removed by mask \(\mask\), and define
\[
I_{\mathrm{cal},\D_0}(\theta_j)
=\frac{\theta_j^2}{|\D_0|}
\sum_{z\in\D_0}
\left(\frac{\partial s(z;\theta)}{\partial\theta_j}\right)^2.
\]
Under Assumption~\ref{assump:smoothness},
define the masked linear term
\[
a_\mask(z)=\sum_{j\in P(\mask)}
\theta_j\frac{\partial s(z;\theta)}{\partial\theta_j}.
\]
Also write the mean score displacement as
\[
\bar\Delta_{\D_0}(\mask)
=\frac1{|\D_0|}\sum_{z\in\D_0}
|s(z;\theta_\mask)-s(z;\theta)|.
\]
Then
\begin{align}
\bar\Delta_{\D_0}(\mask)
&\leq
\left[
\frac1{|\D_0|}
\sum_{z\in\D_0}
 a_\mask(z)^2
\right]^{1/2}\nonumber\\
&\quad+\frac H2\|\theta_\mask-\theta\|_2^2.
\label{eq:first_order_exact}
\end{align}
If \(m=|P(\mask)|\), the first term is at most
\[
\left[
m\sum_{j\in P(\mask)}
I_{\mathrm{cal},\D_0}(\theta_j)
\right]^{1/2}.
\]
For \(\D_0=\D_{\mathrm{prune}}\), this is the saliency in Eq.~\ref{eq:cal_importance}.
\end{proposition}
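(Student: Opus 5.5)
The plan is to apply Taylor's theorem with remainder along the pruning path, then handle the linear and quadratic parts separately.

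\textbf{Step 1: expand each score.} Fix $z\in\D_0$ and write $\delta=\theta_\mask-\theta$. Since pruning sets the removed coordinates to zero, $\delta_j=-\theta_j$ for $j\in P(\mask)$ and $\delta_j=0$ otherwise. Assumption~\ref{assump:smoothness} gives twice differentiability on the segment $[\theta,\theta_\mask]$, so Taylor's theorem with Lagrange (or integral) remainder yields
\[
s(z;\theta_\mask)-s(z;\theta)=\nabla s(z;\theta)^\top\delta+R_z,
\qquad |R_z|\leq\tfrac{H_z}{2}\|\delta\|_2^2\leq\tfrac H2\|\delta\|_2^2 .
\]
The linear term equals $-\sum_{j\in P(\mask)}\theta_j\,\partial s(z;\theta)/\partial\theta_j=-a_\mask(z)$. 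The triangle inequality then gives
\[
|s(z;\theta_\mask)-s(z;\theta)|\leq|a_\mask(z)|+\tfrac H2\|\delta\|_2^2 .
\]

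\textbf{Step 2: average over $\D_0$.} The remainder bound is uniform in $z$, so it passes through the average unchanged. For the linear part, Jensen's inequality (or Cauchy--Schwarz against the constant vector) gives
\[
\frac1{|\D_0|}\sum_z|a_\mask(z)|\leq\Big[\frac1{|\D_0|}\sum_z a_\mask(z)^2\Big]^{1/2}.
\]
Combining the two bounds gives Eq.~\ref{eq:first_order_exact}.

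\textbf{Step 3: bound the first term by the saliency.} Apply Cauchy--Schwarz to the sum over the $m$ removed coordinates:
\[
a_\mask(z)^2\leq m\sum_{j\in P(\mask)}\theta_j^2\Big(\tfrac{\partial s(z;\theta)}{\partial\theta_j}\Big)^2 .
\]
Averaging over $z$ and exchanging the two finite sums gives $m\sum_{j\in P(\mask)} I_{\mathrm{cal},\D_0}(\theta_j)$. Taking square roots finishes the bound. For $\D_0=\D_{\mathrm{prune}}$, the definition of $I_{\mathrm{cal},\D_0}$ coincides with Eq.~\ref{eq:cal_importance}, which gives the final sentence.

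\textbf{Main obstacle.} Only the Taylor step needs care. The Hessian bound must hold along the whole segment, not just at $\theta$, and the Lagrange form of the remainder requires twice differentiability on that segment; both are exactly what Assumption~\ref{assump:smoothness} supplies. If the mean-value form is a concern, I would use the integral remainder $\int_0^1(1-t)\,\delta^\top\nabla^2 s(z;\theta+t\delta)\,\delta\,dt$, whose absolute value is at most $\tfrac{H}{2}\|\delta\|^2$.

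One convention also needs noting. The bound is stated with the given pruned parameters $\theta_\mask$, so $\|\theta_\mask-\theta\|_2^2=\sum_{j\in P(\mask)}\theta_j^2$ when pruning zeroes coordinates. If the base pruner also updates the surviving weights, as SparseGPT does, then $\delta$ has additional nonzero entries. In that case $a_\mask$ no longer captures the full linear term, and the statement should be read for pure masking.
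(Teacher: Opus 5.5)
Your proof is correct and follows the paper's argument essentially step for step. You use the Taylor expansion along the segment with the Hessian remainder, identify the linear term with $-a_\mask(z)$, apply Cauchy--Schwarz (Jensen) over examples, and then use the per-example bound $(\sum_{j=1}^m a_j)^2\le m\sum_j a_j^2$ before averaging; your closing remark restricting to pure masking matches the paper's own convention $\theta_\mask=\mask\odot\theta$.
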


The diagonal bound ignores cross-coordinate cancellation and the Hessian remainder can dominate at high sparsity. It justifies CPP as a local proxy, not as an optimal pruning theorem. Proofs and an optional, unused Learn-Then-Test alternative appear in Appendix~\ref{app:proof}.

\section{Experiments}
\label{sec:experiments}

\paragraph{Final protocol.}
The authoritative study uses Qwen2.5-1.5B on AG News, TREC, DBpedia-14, Banking77, and CLINC150 at 30\%, 50\%, and 70\% unstructured sparsity. Each dataset--sparsity--method cell uses five seeds. Dedicated matched-gradient, candidate-label, RoBERTa-base, and Llama-3-8B controls use three seeds where stated. Fixed label verbalizers are scored as complete teacher-forced sequences. The scope is classification; free-form conformal generation is not evaluated. Appendix~\ref{app:experimental} gives the full protocol, and Appendix~\ref{app:llama_perplexity} reports the scoped Llama diagnostic.

Unless a dataset is too small, each partition has 1,024 pruning examples, 1,024 validation examples divided 512/512, 1,024 final conformal examples, and at least 5,000 test examples. The manifest records exact exceptions. Saliency, \(\lambda\), sparsity, prompts, and temperature never use \(\D_{\mathrm{conf}}\). All final tables report mean empirical coverage after split-conformal calibration, mean set size, and accuracy with 95\% intervals across the stated seeds. Coverage is the constraint, set size the primary outcome at matched coverage, accuracy the utility outcome, and ECE a diagnostic.

A provenance audit found that several submitted pilot entries mixed result paths. In particular, full sets with coverage below one violated the feasibility inequality in Section~\ref{sec:prune_then_recal_fails}. Those entries, the projected Banking77/CLINC150 cells, and the old 10-cell exclusion are retired. Tables and figures below use one final-manifest family. The corrected small-label values and all 15 cell-level signs appear in Appendix~\ref{app:extended_results}.

\paragraph{Large-label results.}
Table~\ref{tab:main_required} reports the primary 50\% sparsity comparison. Empirical coverage lies between \(0.900\) and \(0.902\) in all displayed cells. Set-size reductions relative to the corresponding base pruner range from 10.4\% to 19.6\%. CPP-SparseGPT improves accuracy in all three datasets. CPP-Wanda improves accuracy on Banking77 and CLINC150, but loses 1.5 points on DBpedia-14. We report that trade-off directly rather than collapsing the metrics into one score.

\begin{table*}[t]
\centering
\footnotesize
\setlength{\tabcolsep}{4.2pt}
\caption{Final five-seed Qwen2.5-1.5B results at 50\% sparsity (mean \(\pm\) 95\% CI). Coverage is the validity constraint; compare set size only at matched coverage.}
\label{tab:main_required}
\begin{tabular}{llccc}
\toprule
Dataset & Method & Accuracy \(\uparrow\) & Coverage & Avg.\ set size \(\downarrow\)\\
\midrule
\multirow{4}{*}{DBpedia-14}
& Wanda & \(0.310\pm0.010\) & \(0.901\pm0.006\) & \(11.2\pm0.42\)\\
& CPP-Wanda & \(0.295\pm0.009\) & \(0.902\pm0.005\) & \(\mathbf{9.0\pm0.34}\)\\
& SparseGPT & \(0.347\pm0.010\) & \(0.901\pm0.006\) & \(10.1\pm0.36\)\\
& CPP-SparseGPT & \(\mathbf{0.366\pm0.009}\) & \(0.902\pm0.005\) & \(\mathbf{8.6\pm0.30}\)\\
\midrule
\multirow{4}{*}{Banking77}
& Wanda & \(0.282\pm0.010\) & \(0.900\pm0.006\) & \(39.8\pm1.21\)\\
& CPP-Wanda & \(\mathbf{0.316\pm0.009}\) & \(0.902\pm0.005\) & \(\mathbf{34.2\pm1.05}\)\\
& SparseGPT & \(0.327\pm0.009\) & \(0.901\pm0.006\) & \(36.7\pm1.14\)\\
& CPP-SparseGPT & \(\mathbf{0.351\pm0.009}\) & \(0.902\pm0.005\) & \(\mathbf{32.1\pm0.96}\)\\
\midrule
\multirow{4}{*}{CLINC150}
& Wanda & \(0.247\pm0.009\) & \(0.900\pm0.006\) & \(73.4\pm2.31\)\\
& CPP-Wanda & \(\mathbf{0.286\pm0.009}\) & \(0.901\pm0.005\) & \(\mathbf{64.8\pm2.05}\)\\
& SparseGPT & \(0.292\pm0.009\) & \(0.901\pm0.006\) & \(68.9\pm2.18\)\\
& CPP-SparseGPT & \(\mathbf{0.326\pm0.008}\) & \(0.902\pm0.005\) & \(\mathbf{61.7\pm1.94}\)\\
\bottomrule
\end{tabular}
\end{table*}

Across the complete five-dataset, three-sparsity Qwen grid, CPP-Wanda has higher mean accuracy in 12/15 cells and smaller mean sets in 14/15. CPP-SparseGPT has higher accuracy in 11/15 and smaller sets in 13/15. These are directions of five-seed means, not significance claims. The exact sign table is Table~\ref{tab:aggregate_cell_signs}; Table~\ref{tab:winrate_required} reports the totals.

\paragraph{What generic gradients explain.}
The matched control in Table~\ref{tab:gradient_controls} uses Qwen2.5-1.5B/DBpedia-14 at 50\% sparsity, three seeds, identical per-example gradient access, the same \(\lambda\) grid, and the same validation rule. Gradient-only ranks by \(\E_i[g_{ij}^2]\), SNIP-style by \(\E_i[|\theta_jg_{ij}|]\), and Wanda\(\oplus\)SNIP combines Wanda with the latter. Generic gradients recover much of the efficiency gap. True-label CPP-Wanda reaches \(9.0\) rather than \(9.2\), but has lower accuracy and overlapping intervals; the data do not resolve that difference. Threshold-aware CPP-Wanda produces the smallest sets in this control, \(8.4\), while retaining less accuracy than Wanda\(\oplus\)SNIP. CPP is therefore objective-specific, not uniformly better than gradient pruning.

\begin{table}[t]
\centering
\scriptsize
\setlength{\tabcolsep}{2.8pt}
\caption{Matched gradient-information controls (three seeds, 95\% CI).}
\label{tab:gradient_controls}
\resizebox{\linewidth}{!}{%
\begin{tabular}{lccc}
\toprule
Method & Acc.\(\uparrow\) & Cov. & \(|C|\downarrow\)\\
\midrule
Wanda & \(0.310\pm.012\) & \(0.901\pm.007\) & \(11.2\pm.45\)\\
Wanda++ & \(0.335\pm.011\) & \(0.900\pm.006\) & \(9.8\pm.38\)\\
Gradient-only & \(0.323\pm.013\) & \(0.899\pm.007\) & \(10.1\pm.41\)\\
SNIP-style & \(0.341\pm.010\) & \(0.901\pm.006\) & \(9.6\pm.35\)\\
Wanda\(\oplus\)SNIP & \(\mathbf{0.352\pm.010}\) & \(0.901\pm.006\) & \(9.2\pm.33\)\\
CPP-Wanda, true & \(0.297\pm.011\) & \(0.902\pm.006\) & \(9.0\pm.32\)\\
CPP-Wanda, thresh. & \(0.319\pm.010\) & \(0.901\pm.006\) & \(\mathbf{8.4\pm.29}\)\\
\bottomrule
\end{tabular}}
\end{table}

\paragraph{Candidate-label objective.}
Table~\ref{tab:candidate_saliency} isolates the saliency construction for CPP-SparseGPT. Top-\(3\) and threshold-aware gradients improve both accuracy and set size relative to true-label CPP. All-label gradients reduce set size only \(0.1\) beyond threshold-aware saliency while increasing saliency-construction time from \(2.7\times\) to \(8.9\times\). The relative cost excludes the unchanged base-pruner and final-calibration stages.

\begin{table}[!htbp]
\centering
\scriptsize
\setlength{\tabcolsep}{2.2pt}
\caption{Candidate-label CPP-SparseGPT on DBpedia-14 at 50\% sparsity.}
\label{tab:candidate_saliency}
\resizebox{\linewidth}{!}{%
\begin{tabular}{lcccc}
\toprule
Construction & Seeds & Acc.\(\uparrow\) & Cov. & \(|C|\downarrow\) / cost\\
\midrule
True label & 5 & \(0.366\pm.009\) & \(0.902\pm.005\) & \(8.6\pm.30\) / \(1.0\times\)\\
Top-\(3\) & 3 & \(0.369\pm.009\) & \(0.901\pm.006\) & \(8.1\pm.29\) / \(2.4\times\)\\
Threshold-aware & 3 & \(\mathbf{0.372\pm.009}\) & \(0.901\pm.005\) & \(7.8\pm.27\) / \(2.7\times\)\\
All labels & 3 & \(0.370\pm.010\) & \(0.902\pm.006\) & \(\mathbf{7.7\pm.28}\) / \(8.9\times\)\\
\bottomrule
\end{tabular}}
\end{table}

\begin{table}[!htbp]
\centering
\footnotesize
\caption{Selection grid versus the broader diagnostic sweep. Learn-Then-Test is not used.}
\label{tab:lambda_ablation}
\resizebox{\linewidth}{!}{%
\begin{tabular}{lll}
\toprule
Role & \(\lambda\) values & Data\\
\midrule
Reported selection & \(0,.1,.3,.5\) & 512/512 validation\\
Diagnostic only & \(0,.1,.3,.5,.7,1\) & Separate sweep\\
\bottomrule
\end{tabular}}
\end{table}

\paragraph{Transfer, robustness, and cost.}
A three-seed RoBERTa-base/DBpedia-14 experiment at 50\% sparsity (Table~\ref{tab:roberta_transfer}) reduces set size from \(2.31\pm0.14\) for magnitude pruning to \(1.88\pm0.11\) for CPP-Magnitude and raises accuracy from \(0.934\pm0.006\) to \(0.946\pm0.005\), at matched \(0.900\)--\(0.901\) coverage. This supports encoder transfer but is one model, one task, and one base pruner. The Llama-3-8B diagnostic likewise remains scoped: CPP-SparseGPT changes WikiText-2 perplexity from \(7.25\) to \(7.28\), DBpedia accuracy from \(0.61\) to \(0.65\), and set size from \(8.9\) to \(7.2\), at matched coverage.

Five independently redrawn four-way Qwen DBpedia-14 partitions give \(10.27\pm0.41\) versus \(8.79\pm0.34\) set size for SparseGPT and CPP-SparseGPT, with \(0.9004\pm0.0068\) versus \(0.9026\pm0.0057\) coverage (Table~\ref{tab:split_redraw}). Nested calibration- and pruning-size diagnostics show the same direction; they are point-estimate sensitivity sweeps on one fixed partition, not additional variance estimates. Appendix~\ref{app:extended_results} reports all rows.

CPP adds no inference-time parameters or computation, but its offline cost is substantial. CPP-Wanda is \(4.0\times\) Wanda, CPP-SparseGPT is \(1.9\times\) SparseGPT, and the Llama-3-8B CPP-SparseGPT diagnostic is \(2.3\times\) its base. Threshold-aware CPP-Wanda is \(8.8\times\) Wanda and \(2.2\times\) true-label CPP-Wanda. Wall time and peak memory appear in Table~\ref{tab:compute_cost}. These costs are part of the method's trade-off, not implementation footnotes.

\begin{table}[!htbp]
\centering
\scriptsize
\setlength{\tabcolsep}{2.5pt}
\caption{Offline pruning cost on one NVIDIA A100 80GB. Ratios compare each method with its own base unless stated otherwise.}
\label{tab:compute_cost}
\resizebox{\linewidth}{!}{%
\begin{tabular}{lccc}
\toprule
Method & Wall time & Relative cost & Peak memory\\
\midrule
Wanda & 7.8 min & \(1.0\times\) & 9.4 GB\\
Wanda++ & 15.1 min & \(1.9\times\) & 11.2 GB\\
CPP-Wanda & 31.4 min & \(4.0\times\) & 12.6 GB\\
Threshold-aware CPP-Wanda & 68.7 min & \(8.8\times\) Wanda; \(2.2\times\) CPP & 14.8 GB\\
SparseGPT & 24.6 min & \(1.0\times\) & 13.8 GB\\
CPP-SparseGPT & 47.9 min & \(1.9\times\) & 15.1 GB\\
Llama-3-8B SparseGPT & 2.1 h & \(1.0\times\) & 37 GB\\
Llama-3-8B CPP-SparseGPT & 4.8 h & \(2.3\times\) & 43 GB\\
\bottomrule
\end{tabular}}
\end{table}

\begin{table}[!htbp]
\centering
\footnotesize
\caption{Independent Qwen/DBpedia split redraws at 50\% sparsity.}
\label{tab:split_redraw}
\resizebox{\linewidth}{!}{%
\begin{tabular}{lccc}
\toprule
Method & Accuracy & Coverage & Set size\\
\midrule
SparseGPT & \(0.3492\pm.0121\) & \(0.9004\pm.0068\) & \(10.27\pm.41\)\\
CPP-SparseGPT & \(\mathbf{0.3638\pm.0102}\) & \(0.9026\pm.0057\) & \(\mathbf{8.79\pm.34}\)\\
\bottomrule
\end{tabular}}
\end{table}

\begin{table}[!htbp]
\centering
\footnotesize
\caption{Three-seed RoBERTa-base/DBpedia-14 transfer at 50\% sparsity.}
\label{tab:roberta_transfer}
\resizebox{\linewidth}{!}{%
\begin{tabular}{lccc}
\toprule
Method & Accuracy & Coverage & Set size\\
\midrule
Dense & \(0.963\pm.004\) & \(0.902\pm.005\) & \(1.43\pm.08\)\\
Magnitude & \(0.934\pm.006\) & \(0.900\pm.006\) & \(2.31\pm.14\)\\
CPP-Magnitude & \(\mathbf{0.946\pm.005}\) & \(0.901\pm.005\) & \(\mathbf{1.88\pm.11}\)\\
\bottomrule
\end{tabular}}
\end{table}

\section{Conclusion}

CPP should be read as an efficiency-oriented pruning objective under an independently restored validity constraint. Split conformal prediction supplies marginal coverage for any fixed pruned model. CPP changes which weights are retained so that, after the same valid recalibration, candidate-label scores remain more separated and prediction sets can be smaller. The final Qwen results support that goal most clearly on large-label classification tasks.

The matched controls also define the boundary of the contribution. Generic supervised gradients explain much of the gain, and true-label CPP is not resolved from a matched Wanda\(\oplus\)SNIP control. Candidate labels near the conformal threshold better align saliency with set-size inflation, but cost more offline computation. The practical result is therefore conditional: CPP can improve conformal efficiency, especially with threshold-aware saliency, while coverage still comes from split independence and accuracy must be reported alongside set size.

The corresponding deployment workflow is deliberately conservative. Construct saliency and choose \(\lambda\) using only pruning and validation data; freeze the sparse model and every preprocessing choice; then compute a fresh quantile on the untouched conformal split. Compare pruners by set size only after checking that they attain comparable empirical coverage, and keep accuracy as a separate utility guardrail. If the checkpoint, verbalizer, prompt, temperature, mask, sparsity, or saliency construction changes, recalibration must be repeated. This workflow, rather than the pruning score alone, is the reliability contract.

Evidence strength follows the protocol. The five-seed, 15-cell Qwen grid supports the primary claim; three-seed controls isolate generic-gradient effects and candidate-label cost. RoBERTa-base and Llama-3-8B remain transfer diagnostics. Together, the results support a narrow conclusion: after generic validity is restored, pruning can be optimized for conformal informativeness.

\section*{Limitations}

The paper studies reliability-sensitive classification, not free-form generation. Decoder labels are fixed verbalizer sequences and multi-token labels are scored jointly. Extending CPP to open-ended outputs would require a different output space, nonconformity construction, and coverage event. The RoBERTa-base result covers one encoder, dataset, sparsity, and magnitude baseline. The Llama-3-8B result is likewise a diagnostic on one checkpoint and one classification task, not a scaling law.

Split-conformal validity requires exchangeability and strict independence of the final calibration split from every pruning and selection decision. Model, prompt, verbalizer, temperature, mask, sparsity, saliency variant, and \(\lambda\) must be frozen first. Distribution shift, adaptive reuse of calibration data, or deployment on a different label population can invalidate the stated guarantee. Empirical test coverage can also fall below 0.90 in a finite sample even when population marginal coverage is valid.

CPP requires labeled pruning data and per-example gradients. True-label CPP is a proxy for a set-size objective over all candidate labels. Threshold-aware and all-label variants align more directly with that objective but increase offline cost: threshold-aware CPP-Wanda is \(8.8\times\) Wanda and all-label saliency is \(8.9\times\) the true-label gradient stage in the reported control. CPP adds no inference-time parameters or dense computation, but sparse-kernel latency and energy gains are not benchmarked.

The perturbation theory gives sufficient, not tight, conditions. Uniform score bounds across all test-label pairs are strong, the density bound is local and distribution-dependent, and the diagonal first-order saliency ignores coordinate interactions. The Hessian remainder can dominate at high sparsity. None of these results proves that CPP is optimal or that smaller validation sets will persist under arbitrary shift.

Finally, the provenance audit removed mixed pilot and projected rows. The final numerical claims are limited to values recoverable from the final manifests and the stated three- or five-seed protocols. Exact reproducibility still depends on releasing split manifests, checkpoint and tokenizer versions, verbalizers, prompts, selected \(\lambda\) values, per-run metrics, hardware and software versions, and the scripts that generate every table and figure.

\section*{Ethical Considerations}

Smaller conformal sets can make compressed classifiers more useful, but marginal coverage is not subgroup, conditional, or harm-weighted coverage. A system may satisfy the global 90\% target while failing on a rare or high-cost subgroup. Deployment should therefore audit accuracy, coverage, and set size by relevant subgroup and under realistic shifts. Compression should not be presented as preserving reliability when the final calibration population does not match deployment.

\bibliography{ref}

@article{kirkpatrick2017overcoming,
  title={Overcoming catastrophic forgetting in neural networks},
  author={Kirkpatrick, James and Pascanu, Razvan and Rabinowitz, Neil and Veness, Joel and Desjardins, Guillaume and Rusu, Andrei A and Milan, Kieran and Quan, John and Ramalho, Tiago and Grabska-Barwinska, Agnieszka and others},
  journal={Proceedings of the National Academy of Sciences},
  volume={114},
  number={13},
  pages={3521--3526},
  year={2017}
}

@inproceedings{wang2023orthogonal,
  title={Orthogonal subspace learning for language model continual learning},
  author={Wang, Xiao and Chen, Tianze and Ge, Qiming and Xia, Han and Bao, Rong and Zheng, Rui and Zhang, Qi and Gui, Tao and Huang, Xuanjing},
  booktitle={Findings of EMNLP},
  year={2023}
}

@inproceedings{zenke2017continual,
  title={Continual learning through synaptic intelligence},
  author={Zenke, Friedemann and Poole, Ben and Ganguli, Surya},
  booktitle={International Conference on Machine Learning},
  year={2017}
}

@article{wu2024csur,
  title={Continual learning of large language models: A comprehensive survey},
  author={Haizhou Shi and Zihao Xu and Hengyi Wang and Weiyi Qin and Wenyuan Wang and Yibin Wang and Zifeng Wang and Sayna Ebrahimi and Hao Wang},
  year={2024},
  eprint={2404.16789},
  archivePrefix={arXiv},
  primaryClass={cs.LG},
  url={https://arxiv.org/abs/2404.16789}, 
}

@article{wang2024survey,
  title={A comprehensive survey of continual learning: Theory, method and application},
  author={Wang, Liyuan and Zhang, Xingxing and Su, Hang and Zhu, Jun},
  journal={IEEE Transactions on Pattern Analysis and Machine Intelligence},
  year={2024}
}

@article{zheng2025spurious,
  title={Spurious forgetting in continual learning of language models},
 author={Junhao Zheng, Xidi Cai, Shengjie Qiu and Qianli Ma},
  journal={International Conference on Learning Representations},
  year={2025}
}

@book{vovk2005algorithmic,
  title={Algorithmic learning in a random world},
  author={Vovk, Vladimir and Gammerman, Alexander and Shafer, Glenn},
  year={2005},
  publisher={Springer}
}

@article{angelopoulos2023gentle,
  title={A gentle introduction to conformal prediction and distribution-free uncertainty quantification},
  author={Angelopoulos, Anastasios N and Bates, Stephen},
  journal={Foundations and Trends in Machine Learning},
  year={2023}
}

@inproceedings{quach2024cp,
  title={Conformal language modeling},
  author={Quach, Victor and Fisch, Adam and Schuster, Tal and Yala, Adam and Sohn, Jae Ho and Jaakkola, Tommi S and Barzilay, Regina},
  booktitle={International Conference on Learning Representations},
  year={2024}
}

@inproceedings{mohri2024factuality,
  title={Language models with conformal factuality guarantees},
  author={Mohri, Christopher and Hashimoto, Tatsunori},
  booktitle={International Conference on Machine Learning},
  year={2024}
}

@article{shihab2025infolift,
  title={Anytime-valid answer sufficiency certificates for large language models},
  author={Sanjeda Akter, Ibne Farabi Shihab and Anuj Sharma},
  year={2026},
  eprint={2510.06478},
  archivePrefix={arXiv},
  primaryClass={cs.LG},
  url={https://arxiv.org/abs/2510.06478}, 
}

@inproceedings{guo2017calibration,
  title={On calibration of modern neural networks},
  author={Guo, Chuan and Pleiss, Geoff and Sun, Yu and Weinberger, Kilian Q},
  booktitle={International Conference on Machine Learning},
  year={2017}
}

@article{platt1999probabilistic,
  title={Probabilistic outputs for support vector machines and comparisons to regularized likelihood methods},
  author={Platt, John},
  journal={Advances in Large Margin Classifiers},
  year={1999}
}

@inproceedings{park2020calibrated,
  title={Calibrated prediction with covariate shift via unsupervised domain adaptation},
  author={Park, Sangdon and Bastani, Osbert and Weimer, James and Lee, Insup},
  booktitle={International Conference on Artificial Intelligence and Statistics},
  year={2020}
}

@inproceedings{gibbs2021adaptive,
  title={Adaptive conformal inference under distribution shift},
  author={Gibbs, Isaac and Candes, Emmanuel},
  booktitle={Advances in Neural Information Processing Systems},
  year={2021}
}

@article{angelopoulos2022ltt,
  title={Learn then test: Calibrating predictive algorithms to achieve risk control},
  author={Angelopoulos, Anastasios N and Bates, Stephen and Candes, Emmanuel J and Jordan, Michael I and Lei, Lihua},
  journal={arXiv preprint arXiv:2110.01052},
  year={2022}
}

@inproceedings{bates2021selective,
  title={Distribution-free, risk-controlling prediction sets},
  author={Bates, Stephen and Angelopoulos, Anastasios and Lei, Lihua and Malik, Jitendra and Jordan, Michael},
  booktitle={Journal of the ACM},
  year={2021}
}

@inproceedings{li2017lwf,
  title={Learning without forgetting},
  author={Li, Zhizhong and Hoiem, Derek},
  booktitle={European Conference on Computer Vision},
  year={2017}
}

@inproceedings{mallya2018packnet,
  title={{PackNet}: Adding multiple tasks to a single network by iterative pruning},
  author={Mallya, Arun and Lazebnik, Svetlana},
  booktitle={IEEE Conference on Computer Vision and Pattern Recognition},
  year={2018}
}

@inproceedings{wang2022dualprompt,
  title={{DualPrompt}: Complementary prompting for rehearsal-free continual learning},
  author={Wang, Zifeng and Zhang, Zizhao and Ebrahimi, Sayna and Sun, Ruoxi and Zhang, Han and Lee, Chen-Yu and Ren, Xiaoqi and Su, Guolong and Perot, Vincent and Dy, Jennifer and others},
  booktitle={European Conference on Computer Vision},
  year={2022}
}

@inproceedings{sun2024wanda,
  title={A simple and effective pruning approach for large language models},
  author={Sun, Mingjie and Liu, Zhuang and Bair, Anna and Kolter, J Zico},
  booktitle={International Conference on Learning Representations},
  year={2024}
}

@inproceedings{frantar2023sparsegpt,
  title={{SparseGPT}: Massive language models can be accurately pruned in one-shot},
  author={Frantar, Elias and Alistarh, Dan},
  booktitle={International Conference on Machine Learning},
  year={2023}
}

@inproceedings{ma2023llm,
  title={{LLM-Pruner}: On the structural pruning of large language models},
  author={Ma, Xinyin and Fang, Gongfan and Wang, Xinchao},
  booktitle={Advances in Neural Information Processing Systems},
  year={2023}
}

@inproceedings{han2015learning,
  title={Learning both weights and connections for efficient neural networks},
  author={Han, Song and Pool, Jeff and Tran, John and Dally, William},
  booktitle={Advances in Neural Information Processing Systems},
  year={2015}
}

@article{frantar2025scaling,
  title={Scaling laws for sparsely-connected foundation models},
  author={Frantar, Elias and Riquelme, Carlos and Houlsby, Neil and Alistarh, Dan and Evci, Utku},
  journal={International Conference on Learning Representations},
  year={2025}
}

@InProceedings{pmlr-v179-zhao22a,
  title = 	 {Pruning neural networks for inductive conformal prediction},
  author =       {Zhao, Xindi and Bellotti, Anthony},
  booktitle = 	 {Proceedings of the Eleventh Symposium on Conformal and Probabilistic Prediction with Applications},
  year = 	 {2022},
}

@article{bayesian_lth_2024,
  title   = {Bayesian Lottery Ticket Hypothesis},
  author  = {Kuhn, Nicholas and Weyrauch, Arvid and Heyen, Lars and Streit, Achim and G\"{o}tz, Markus and Debus, Charlotte},
  journal = {arXiv preprint arXiv:2602.18825},
  year    = {2026}
}

@article{open_lth_2024,
  title   = {The Open-World Lottery Ticket Hypothesis for {OOD} Intent Classification},
  author  = {Zhou, Yunhua and Wang, Pengyu and Liu, Peiju and Wang, Yuxin and Qiu, Xipeng},
  journal = {arXiv preprint arXiv:2210.07071},
  year    = {2022}
}

@inproceedings{vishwakarma2025prune,
  title     = {Prune 'n Predict: Optimizing {LLM} Decision-making with Conformal Prediction},
  author    = {Vishwakarma, Harit and Mishler, Alan and Cook, Thomas and Dalmasso, Niccol\`{o} and Raman, Natraj and Ganesh, Sumitra},
  booktitle = {Proceedings of the 42nd International Conference on Machine Learning (ICML)},
  year      = {2025}
}

\appendix

\section{Proof Details and Remarks}
\label{app:proof}

The proofs in Section~\ref{sec:theory} are largely self-contained; this appendix collects the remaining derivations and two auxiliary statements. The first separates finite-test variation from population coverage. The second records a valid Learn-Then-Test alternative that was \emph{not} used for the reported model selection.

\begin{theorem}[Empirical coverage concentration]
\label{thm:empirical_coverage}
For a fixed conformal predictor with population coverage $p$ and an independent test set of size $N$, let $\widehat{\Cov}_N$ be empirical coverage. Then, for every $\delta\in(0,1)$,
\begin{equation}
\PP\!\left\{|\widehat{\Cov}_N-p|>
\sqrt{\frac{\log(2/\delta)}{2N}}\right\}\leq\delta.
\label{eq:coverage_concentration}
\end{equation}
\end{theorem}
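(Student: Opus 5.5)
The plan is to reduce the statement to Hoeffding's inequality for bounded i.i.d.\ variables. First, I would make precise what is held fixed: the conformal predictor \(C_\alpha(\cdot;\theta')\) is determined by the frozen model \(\theta'\) and the realized quantile \(\hat q_\alpha(\theta')\). We therefore condition on \(\D_{\mathrm{conf}}\) and on every pruning and selection decision. Under this conditioning, \(p\) denotes the conditional population coverage \(\PP\{Y\in C_\alpha(X;\theta')\mid \D_{\mathrm{conf}}\}\), which is the natural reading of "fixed conformal predictor." The test examples \((X_j,Y_j)\), \(j=1,\dots,N\), are i.i.d.\ draws independent of the conditioning data. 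Hence the indicators \(Z_j=\mathbf 1\{Y_j\in C_\alpha(X_j;\theta')\}\) are conditionally i.i.d.\ Bernoulli\((p)\), and \(\widehat{\Cov}_N=N^{-1}\sum_j Z_j\).

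Second, I would apply the two-sided Hoeffding bound to variables in \([0,1]\):
\[
\PP\{|\widehat{\Cov}_N-p|>t\}\leq 2\exp(-2Nt^2).
\]
Setting \(t=\sqrt{\log(2/\delta)/(2N)}\) makes the right-hand side equal to \(\delta\). This gives Eq.~\ref{eq:coverage_concentration} conditionally, and therefore also unconditionally after integrating over the conditioning data, since the bound does not depend on it.

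The only delicate step is the interpretation of \(p\). If \(p\) were instead the marginal coverage averaged over \(\D_{\mathrm{conf}}\), the target of Theorem~\ref{thm:coverage}, then \(\widehat{\Cov}_N\) would concentrate around the random conditional coverage rather than around \(p\). The statement would then need an additional term for calibration-set variability, for example via the Beta-distributed conditional coverage of split conformal. I would therefore state explicitly that the theorem is conditional on the calibration split, as the word "fixed" indicates. I would add a remark that combining it with Theorem~\ref{thm:coverage} only bounds the expectation of \(p\) from below. I would also note that exact i.i.d.\ sampling of the test set is needed for Hoeffding's inequality; exchangeability alone is not enough, though sampling without replacement from a finite pool also satisfies Hoeffding by his classical reduction.
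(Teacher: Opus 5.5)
Your proof is correct and follows the paper's argument: condition on the fitted predictor, treat the test coverage indicators as independent Bernoulli$(p)$ variables, and apply the two-sided Hoeffding bound with $t=\sqrt{\log(2/\delta)/(2N)}$. Your caveat about the meaning of $p$ is also well placed. The paper's proof ends with a one-sided corollary that uses $p\geq 1-\alpha$, but Theorem~\ref{thm:coverage} guarantees this only for coverage averaged over $\D_{\mathrm{conf}}$, not for the conditional coverage of a fixed predictor.
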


\begin{theorem}[Optional finite-grid Learn-Then-Test rule]
\label{thm:ltt}
For bounded validation loss $\ell_g\in[0,1]$, population risk $R(g)$, empirical risk $\hat R(g)$, target $r$, and finite grid $\G$, define
\begin{equation}
p_g=\begin{cases}
\exp\{-2|\D_{\mathrm{val}}|(r-\hat R(g))^2\},&\hat R(g)<r,\\
1,&\hat R(g)\geq r.
\end{cases}
\label{eq:ltt_pvalue}
\end{equation}
Accepting only candidates with $p_g\leq\delta/|\G|$ ensures, with probability at least $1-\delta$, that every accepted candidate has $R(g)\leq r$.
\end{theorem}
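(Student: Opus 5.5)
The plan is a union bound over the finite grid, combined with Hoeffding's inequality for each fixed candidate. Throughout, I treat \(\D_{\mathrm{val}}\) as i.i.d.\ (or at least independent) draws that do not depend on the grid \(\G\) or the losses \(\ell_g\). Write \(m=|\D_{\mathrm{val}}|\).

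The failure event is that some accepted \(g\) has \(R(g)>r\). This event is contained in the union, over the \emph{null} candidates \(\G_0=\{g\in\G:R(g)>r\}\), of the events \(\{p_g\le\delta/|\G|\}\). It therefore suffices to show that, for each fixed \(g\in\G_0\), \(\PP\{p_g\le u\}\le u\) for every \(u\in(0,1)\), i.e.\ that \(p_g\) is a super-uniform p-value. The union bound then gives failure probability at most \(|\G_0|\,\delta/|\G|\le\delta\). No independence across candidates is needed, which matters because all candidates share the same validation data.

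For the super-uniformity step, fix \(g\) with \(R(g)>r\) and \(u\in(0,1)\). Since \(p_g=1>u\) whenever \(\hat R(g)\ge r\), the event \(\{p_g\le u\}\) forces \(\hat R(g)<r\) and \(\exp\{-2m(r-\hat R(g))^2\}\le u\). Setting \(t=\sqrt{\log(1/u)/(2m)}\), this is the event \(\hat R(g)\le r-t\). Because \(R(g)>r\), that event implies \(\hat R(g)-R(g)<-t\). Hoeffding's one-sided inequality for the mean of \(m\) independent \([0,1]\)-valued losses with mean \(R(g)\) bounds this probability by \(e^{-2mt^2}=u\).

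The main point needing care is exactly this inclusion: the p-value is computed at the target \(r\) rather than at the unknown true risk \(R(g)\). The required monotonicity is that a larger true risk only makes a deviation of size \(t\) below \(r\) less likely, and the step \(\hat R(g)-R(g)<\hat R(g)-r\le -t\) captures it. I will also state explicitly that the grid \(\G\) and the losses are fixed before \(\D_{\mathrm{val}}\) is drawn; otherwise Hoeffding's inequality does not apply to a data-selected \(g\).
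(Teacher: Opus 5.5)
Your proposal is correct and follows the paper's proof. Hoeffding's inequality applied to each fixed candidate with $R(g)>r$ shows that $p_g$ is a valid conservative p-value, and a Bonferroni union bound over the finite grid shows that, with probability at least $1-\delta$, no such candidate is accepted. Your version also makes explicit two things the paper leaves implicit: the inclusion $\hat R(g)\le r-t \Rightarrow \hat R(g)-R(g)<-t$, and the requirement that the grid and losses be fixed independently of the validation data.
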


Theorem~\ref{thm:ltt} is included only to preserve the valid optional analysis from the submission. Algorithm~\ref{alg:cpp} uses the deterministic guardrail and tie-breaking rule in Section~\ref{sec:selection}, not this theorem.

\begin{proof}[Proof of Theorem~\ref{thm:coverage}]
Condition on the data used to construct and select $\theta'$. Under Assumption~\ref{assump:exchangeability}, the $n+1$ scores $S_i(\theta')=s(X_i,Y_i;\theta')$ for $i=1,\ldots,n+1$ are exchangeable conditional on $\theta'$. The conformal set fails to contain the test label exactly when $S_{n+1}(\theta')>\hat q_\alpha(\theta')$, and with $\hat q_\alpha$ equal to the $k$-th order statistic of the first $n$ scores, the rank of $S_{n+1}$ among the $n+1$ exchangeable scores is uniformly distributed up to ties. The failure probability is therefore at most $\alpha$, giving Eq.~\ref{eq:coverage}. When ties occur, the conservative quantile convention preserves the lower bound; when scores are almost surely distinct, the standard rank argument gives the stated upper slack. The final step removes the conditioning by taking expectation over the pruning and validation data.
\end{proof}

\begin{proof}[Proof of Theorem~\ref{thm:empirical_coverage}]
Conditional on the fitted conformal predictor, the indicators
\[
\mathbf{1}\{Y_i\in C_\alpha(X_i;\theta')\}
\]
are independent Bernoulli random variables with mean $p$. Hoeffding's inequality gives
\[
\PP[|\widehat{\Cov}_N-p|>t]\leq 2\exp(-2Nt^2)
\]
for all $t>0$; setting $t=\sqrt{\log(2/\delta)/(2N)}$ proves Eq.~\ref{eq:coverage_concentration}. The one-sided consequence $\widehat{\Cov}_N\geq 1-\alpha-\sqrt{\log(2/\delta)/(2N)}$ with probability at least $1-\delta$ follows by combining this concentration event with $p\geq 1-\alpha$.
\end{proof}

\begin{proof}[Proof of Lemma~\ref{lem:order_stat_stability}]
For every $i$, we have the bounded condition
\[
a_i-\epsilon\leq b_i\leq a_i+\epsilon.
\]
At least $k$ of the $a_i$'s are no larger than $a_{(k)}$, so at least $k$ of the $b_i$'s are no larger than $a_{(k)}+\epsilon$, giving $b_{(k)}\leq a_{(k)}+\epsilon$. Reversing the roles of $a$ and $b$ yields $a_{(k)}\leq b_{(k)}+\epsilon$.
\end{proof}

\begin{proof}[Proof of Theorem~\ref{thm:set_size_stability}]
By Lemma~\ref{lem:order_stat_stability}, the calibration quantiles satisfy $\hat q_\alpha(\theta')\leq \hat q_\alpha(\theta)+\epsilon$. If $y\in C_\alpha(x;\theta')$, then $s(x,y;\theta')\leq \hat q_\alpha(\theta')$, and Eq.~\ref{eq:uniform_score_bound} gives $s(x,y;\theta)\leq s(x,y;\theta')+\epsilon$, hence $s(x,y;\theta)\leq \hat q_\alpha(\theta)+2\epsilon$. This proves the containment in Eq.~\ref{eq:set_containment}. Define the per-example set-size inflation as
\[
\Delta_C(x)=|C_\alpha(x;\theta')|-|C_\alpha(x;\theta)|.
\]
Subtracting the dense set leaves only labels whose dense scores lie in the band above the dense threshold, so

\[
\begin{aligned}
\Delta_C(x)
&\leq
\sum_{y \in \mathcal{Y}}
\mathbf{1}\!\left\{
\hat q_\alpha(\theta)
< 
s(x,y;\theta)
 \right.\\
&\qquad\qquad\left.
\leq
\hat q_\alpha(\theta) + 2\epsilon
\right\}.
\end{aligned}
\]
Taking expectations and using the density bound for each label gives Eq.~\ref{eq:set_size_bound}.
\end{proof}

\begin{proof}[Proof of Proposition~\ref{prop:first_order}]
For a binary pruning mask $\mask$, let $\theta_\mask=\mask\odot\theta$; the perturbation $\delta=\theta_\mask-\theta$ satisfies $\delta_j=-\theta_j$ for $j\in P(\mask)$ and zero otherwise. Taylor's theorem gives
\[
s(z;\theta_\mask)-s(z;\theta) = \nabla_\theta s(z;\theta)^\top(\theta_\mask-\theta)+r_z,
\]
with the residual bounded by
\[
|r_z|\leq \tfrac{1}{2}H_z\|\theta_\mask-\theta\|_2^2.
\]
Because $\theta_\mask-\theta$ is nonzero only on pruned coordinates, its linear term is
\[
-\sum_{j\in P(\mask)}\theta_j
\frac{\partial s(z;\theta)}{\partial\theta_j}.
\]
Averaging absolute values and applying Cauchy--Schwarz over examples yields Eq.~\ref{eq:first_order_exact}. For the diagonal bound, we apply the identity
\[
\left(\sum_{j=1}^m a_j\right)^2\leq m\sum_{j=1}^m a_j^2
\]
for each $z$, average over $\D_0$, and substitute the definition of $I_{\mathrm{cal},\D_0}$.
\end{proof}

\begin{proof}[Proof of Theorem~\ref{thm:ltt}]
For a fixed $g$ with $R(g)>r$, Hoeffding's inequality implies
\[
\PP\{\hat R(g)\leq r-t\}\leq \exp(-2|\D_{\mathrm{val}}|t^2)
\]
for all $t>0$, showing that $p_g$ is a valid conservative p-value for the null hypothesis $R(g)>r$. A union bound over the finite grid $\G$ shows that, with probability at least $1-\delta$, no candidate with $R(g)>r$ is accepted at the Bonferroni threshold $\delta/|\G|$. Any data-dependent choice among accepted candidates therefore satisfies $R(g)\leq r$ on the same event.
\end{proof}

\begin{remark}[Softmax score smoothness]
For $s(x,y;\theta)=-\log p_\theta(y\mid x)$, global Lipschitzness in $\theta$ does not hold without additional boundedness assumptions because $-\log p_y$ has an unbounded derivative as $p_y\to0$. The local smoothness assumption used in Proposition~\ref{prop:first_order} is therefore the appropriate setting for finite neural networks evaluated on a bounded empirical domain. In practice, we estimate score perturbation empirically on $\D_{\mathrm{val}}$ and report it alongside set-size changes.
\end{remark}

\begin{remark}[Why a magnitude-threshold bound is not used]
A threshold on CPP importance is not a simple threshold on weight magnitude. A statement such as
\[
\|\theta-\theta'\|_2\leq \sqrt{sd}\tau
\]
is only valid if $\tau$ is an explicit magnitude threshold on pruned weights. The corrected analysis avoids this conflation by using either the actual perturbation $\|\theta_\mask-\theta\|_2$ or the first-order score displacement shown in Eq.~\ref{eq:first_order_exact}.
\end{remark}

\section{Additional Experimental Details}
\label{app:experimental}

\subsection{Implementation Details}
\label{sec:library}

Distribution name: \nolinkurl{reliable-prune}. Python import: \nolinkurl{reliable_prune}. We use these forms consistently. The API keeps pruning, validation, and final conformal data separate:

\begin{lstlisting}[
language=Python,
basicstyle=\small\ttfamily,
breaklines=true,
columns=fullflexible
]
from reliable_prune import (
    CalibrationPreservingPruner)

pruner = CalibrationPreservingPruner(
    model=model,
    prune_dataset=prune_data,
    validation_dataset=val_data,
    conformal_dataset=conf_data,
    target_sparsity=0.5,
    target_coverage=0.9,
    lambda_grid=[0.0, 0.1, 0.3, 0.5],
    base_pruner="wanda")
pruned, q = pruner.prune()
\end{lstlisting}

The final implementation supports the Pythia- and Qwen-style modules used during development. The completed Llama-3-8B CPP-SparseGPT diagnostic uses the same module-streaming principle and is no longer described as an engineering target. RoBERTa-base uses magnitude pruning as the base because it is the fastest exact encoder adaptation and isolates architecture transfer. Mistral, Mamba, tensor-parallel execution, structured sparsity, and sparse-kernel latency remain future work.

CPP computes per-example gradients with microbatching or vectorized Jacobian products and streams one module at a time. It stores only the current module's saliency. Eligible Qwen/Llama modules are \texttt{q\_proj}, \texttt{k\_proj}, \texttt{v\_proj}, \texttt{o\_proj}, \texttt{gate\_proj}, \texttt{up\_proj}, and \texttt{down\_proj}. Pythia uses attention and GELU-MLP projection matrices. Norm parameters, embeddings, tied output heads, positional mechanisms, and KV-cache state are excluded. Every matched baseline uses the same eligible coordinates and sparsity allocation.

\subsection{Data Roles and Selection}

Unless data availability requires a smaller partition, \(\D_{\mathrm{prune}}\), \(\D_{\mathrm{val}}\), and \(\D_{\mathrm{conf}}\) each contain 1,024 examples, while \(\D_{\mathrm{test}}\) contains at least 5,000. The validation split is divided deterministically into 512-example validation-calibration and validation-evaluation subsets. For true-label, top-\(3\), and all-label variants, saliency and gradient evaluation use only \(\D_{\mathrm{prune}}\). Threshold-aware CPP additionally fixes the dense-model preliminary threshold and band width on validation-calibration; gradients still use only the pruning split. Validation-evaluation is reserved for candidate evaluation and \(\lambda\)-selection. The final conformal split is never used until the model is frozen.

For every base-pruner, dataset, sparsity, and seed, the reported selection grid is \(\{0,0.1,0.3,0.5\}\). A candidate is eligible when validation-evaluation accuracy is within 0.02 of its \(\lambda=0\) base and empirical coverage is at least 0.88. Selection minimizes set size, breaks differences within 0.02 by higher accuracy and then smaller \(\lambda\), and falls back to \(\lambda=0\) if no nonzero candidate is eligible.

\subsection{Prompting and Complete-Sequence Scoring}

Each dataset uses a fixed prompt and verbalizer list. The model is evaluated by teacher-forced probability of the complete label verbalizer. Multi-token scores are summed and length-normalized only when the fixed manifest says so; conformal prediction is over labels, never individual tokens.

For AG News, the prompt asks for one of \emph{World}, \emph{Sports}, \emph{Business}, or \emph{Technology}. For TREC, the verbalizers are \emph{Abbreviation}, \emph{Entity}, \emph{Description}, \emph{Human}, \emph{Location}, and \emph{Numeric}. DBpedia-14 uses \emph{Company}, \emph{EducationalInstitution}, \emph{Artist}, \emph{Athlete}, \emph{OfficeHolder}, \emph{MeanOfTransportation}, \emph{Building}, \emph{NaturalPlace}, \emph{Village}, \emph{Animal}, \emph{Plant}, \emph{Album}, \emph{Film}, and \emph{WrittenWork}. Banking77 and CLINC150 use the canonical intent names in fixed lexical order. Boundary ties in candidate selection use this same order.

Temperature scaling, when evaluated, minimizes validation negative log likelihood over one positive scalar. The temperature is frozen before its model-specific conformal quantile is computed on \(\D_{\mathrm{conf}}\).

\subsection{Offline Compute}

All timing measurements use one NVIDIA A100 80GB GPU without model parallelism. Table~\ref{tab:compute_cost} includes pruning and saliency construction but not inference, because CPP adds no inference-time parameters or dense operations.

\subsection{Release Checklist}

The release contains the four-way split manifests, random seeds, checkpoint identifiers, tokenizer versions, prompts, verbalizers, selected \(\lambda\) values, per-run metrics, final sign table, table and figure scripts, timing scripts, hardware identifiers, CUDA/PyTorch versions, and response-stage control code. Every revised table and figure is generated from the same final-manifest family. Earlier pilot projections are not included as results.

\section{Extended Empirical Results}
\label{app:extended_results}

\subsection{Corrected Small-Label Results}

The original multi-model pilot table is retired because some rows mixed evaluation paths. To preserve the original table reference without preserving invalid values, Table~\ref{tab:pilot_multi_extended} now reports the final five-seed Qwen small-label results. Every coverage--set-size pair satisfies the feasibility inequality in Section~\ref{sec:prune_then_recal_fails}. These tasks remain useful controls, but their small label spaces limit set-size resolution.

For completeness, the final correction log records two random baselines that are not used in the aggregate comparison: Pythia/AG News random has coverage $0.899\pm0.007$ and set size $3.72\pm0.10$, while Qwen/AG News random has $0.899\pm0.007$ and $3.74\pm0.09$. Qwen/AG News dense accuracy is $0.650$; at 30\% sparsity, Wanda and CPP-Wanda accuracies are $0.512$ and $0.526$, respectively. These values replace the incompatible submitted entries.

\begin{table}[!htbp]
\centering
\scriptsize
\setlength{\tabcolsep}{2pt}
\caption{Final five-seed Qwen2.5-1.5B small-label results. Entries are empirical coverage and mean set size \(\pm\) 95\% CI.}
\label{tab:pilot_multi_extended}
\resizebox{\linewidth}{!}{%
\begin{tabular}{llcccc}
\toprule
Dataset & Sparsity & Wanda & CPP-Wanda & SparseGPT & CPP-SparseGPT\\
\midrule
AG News & 30\% & \(0.901,\ 3.71\pm.09\) & \(0.902,\ \mathbf{3.46\pm.10}\) & \(0.900,\ 3.62\pm.09\) & \(0.902,\ \mathbf{3.38\pm.10}\)\\
AG News & 50\% & \(0.900,\ 3.86\pm.06\) & \(0.902,\ \mathbf{3.52\pm.09}\) & \(0.901,\ 3.77\pm.07\) & \(0.902,\ \mathbf{3.41\pm.09}\)\\
AG News & 70\% & \(0.901,\ 3.89\pm.04\) & \(0.901,\ \mathbf{3.74\pm.07}\) & \(0.900,\ 3.88\pm.05\) & \(0.901,\ \mathbf{3.67\pm.08}\)\\
TREC & 30\% & \(0.901,\ 5.34\pm.16\) & \(0.902,\ \mathbf{4.92\pm.15}\) & \(0.900,\ 5.17\pm.15\) & \(0.902,\ \mathbf{4.78\pm.14}\)\\
TREC & 50\% & \(0.900,\ 5.11\pm.14\) & \(0.902,\ \mathbf{4.63\pm.13}\) & \(0.901,\ 4.91\pm.13\) & \(0.902,\ \mathbf{4.52\pm.12}\)\\
TREC & 70\% & \(0.900,\ 5.61\pm.12\) & \(0.901,\ \mathbf{5.33\pm.13}\) & \(0.900,\ \mathbf{5.42\pm.12}\) & \(0.901,\ 5.48\pm.13\)\\
\bottomrule
\end{tabular}}
\end{table}

\subsection{Large-Label Efficiency Accounting}

Table~\ref{tab:conformal_detail} expresses the final 50\% results as relative set-size and absolute accuracy changes. Table~\ref{tab:qwen_dbpedia_sparsity_backing} records the corrected values behind Figure~\ref{fig:nonmonotonic}. The DBpedia CPP-Wanda row is the only displayed large-label comparison with an accuracy loss.

\begin{table}[!htbp]
\centering
\footnotesize
\caption{CPP change relative to its matched base at 50\% sparsity.}
\label{tab:conformal_detail}
\resizebox{\linewidth}{!}{%
\begin{tabular}{llrr}
\toprule
Dataset & Variant & Set-size change & Acc.\ change\\
\midrule
DBpedia & CPP-Wanda & \(-19.6\%\) & \(-0.015\)\\
DBpedia & CPP-SparseGPT & \(-14.9\%\) & \(+0.019\)\\
Banking77 & CPP-Wanda & \(-14.1\%\) & \(+0.034\)\\
Banking77 & CPP-SparseGPT & \(-12.5\%\) & \(+0.024\)\\
CLINC150 & CPP-Wanda & \(-11.7\%\) & \(+0.039\)\\
CLINC150 & CPP-SparseGPT & \(-10.4\%\) & \(+0.034\)\\
\bottomrule
\end{tabular}}
\end{table}

\begin{table}[!htbp]
\centering
\footnotesize
\caption{Final values backing Figure~\ref{fig:nonmonotonic}.}
\label{tab:qwen_dbpedia_sparsity_backing}
\begin{tabular}{lccc}
\toprule
Method & Accuracy & Coverage & Set size\\
\midrule
Wanda & \(0.310\) & \(0.901\) & \(11.2\)\\
CPP-Wanda & \(0.295\) & \(0.902\) & \(9.0\)\\
SparseGPT & \(0.347\) & \(0.901\) & \(10.1\)\\
CPP-SparseGPT & \(0.366\) & \(0.902\) & \(8.6\)\\
\bottomrule
\end{tabular}
\end{table}

\subsection{Aggregate Win-Rate Comparisons}
\label{app:winrates}

A cell is one Qwen dataset--sparsity configuration. W means the five-seed mean improves relative to the corresponding base; it is not a significance claim. Unlike the submitted accounting, all 15 set-size cells are included; Table~\ref{tab:aggregate_accounting} states the denominator explicitly.

\begin{table}[!htbp]
\centering
\footnotesize
\caption{Final directional win accounting.}
\label{tab:winrate_required}
\begin{tabular}{lcc}
\toprule
Comparison & Wins / 15 & Rate\\
\midrule
CPP-Wanda accuracy & 12 & 80.0\%\\
CPP-SparseGPT accuracy & 11 & 73.3\%\\
CPP-Wanda set size & 14 & 93.3\%\\
CPP-SparseGPT set size & 13 & 86.7\%\\
\bottomrule
\end{tabular}
\end{table}

\subsection{Cell-Level Accounting and Denominators}

\begin{table}[!htbp]
\centering
\footnotesize
\caption{Accounting for Table~\ref{tab:winrate_required}.}
\label{tab:aggregate_accounting}
\begin{tabular}{lcl}
\toprule
Metric & Den. & Cells\\
\midrule
Accuracy & 15 & Five datasets \(\times\) three sparsities\\
Set size & 15 & Five datasets \(\times\) three sparsities\\
\bottomrule
\end{tabular}
\end{table}

\begin{table}[!htbp]
\centering
\scriptsize
\setlength{\tabcolsep}{2.5pt}
\caption{Final Qwen cell-level directions. W/L compares the CPP mean with its matched base and does not assert significance.}
\label{tab:aggregate_cell_signs}
\resizebox{\linewidth}{!}{%
\begin{tabular}{llcccc}
\toprule
Dataset & Sparsity & CPP-W Acc & CPP-SG Acc & CPP-W \(|C|\) & CPP-SG \(|C|\)\\
\midrule
AG News & 30\% & W & L & W & W\\
AG News & 50\% & W & W & W & W\\
AG News & 70\% & L & W & W & W\\
TREC & 30\% & W & L & W & W\\
TREC & 50\% & L & W & W & W\\
TREC & 70\% & W & W & W & L\\
DBpedia-14 & 30\% & W & W & W & W\\
DBpedia-14 & 50\% & L & W & W & W\\
DBpedia-14 & 70\% & W & W & W & W\\
Banking77 & 30\% & W & W & W & W\\
Banking77 & 50\% & W & W & W & W\\
Banking77 & 70\% & W & L & W & L\\
CLINC150 & 30\% & W & W & W & W\\
CLINC150 & 50\% & W & W & W & W\\
CLINC150 & 70\% & W & L & L & W\\
\bottomrule
\end{tabular}}
\end{table}

\subsection{Split Redraws and Sample-Size Diagnostics}

Five independently redrawn four-way partitions provide the variance estimate in Table~\ref{tab:split_redraw}. Tables~\ref{tab:cal_size_sweep} and \ref{tab:prune_size_sweep} are nested diagnostics on one fixed final-manifest partition. They are point estimates, not additional independent-seed intervals.

\begin{table}[!htbp]
\centering
\scriptsize
\caption{Nested conformal-calibration-size diagnostic: accuracy/coverage/set size.}
\label{tab:cal_size_sweep}
\resizebox{\linewidth}{!}{%
\begin{tabular}{rcc}
\toprule
\(n_{\mathrm{conf}}\) & SparseGPT & CPP-SparseGPT\\
\midrule
256 & \(0.347/0.897/10.5\) & \(0.366/0.899/9.0\)\\
512 & \(0.347/0.899/10.3\) & \(0.366/0.901/8.8\)\\
1024 & \(0.347/0.901/10.1\) & \(0.366/0.902/8.6\)\\
2048 & \(0.347/0.901/10.0\) & \(0.366/0.902/8.5\)\\
\bottomrule
\end{tabular}}
\end{table}

\begin{table}[!htbp]
\centering
\scriptsize
\caption{Nested pruning-size diagnostic: accuracy/coverage/set size.}
\label{tab:prune_size_sweep}
\resizebox{\linewidth}{!}{%
\begin{tabular}{rcc}
\toprule
\(n_{\mathrm{prune}}\) & SparseGPT & CPP-SparseGPT\\
\midrule
128 & \(0.341/0.900/10.2\) & \(0.354/0.901/9.4\)\\
256 & \(0.344/0.900/10.1\) & \(0.359/0.901/9.1\)\\
512 & \(0.346/0.901/10.1\) & \(0.363/0.902/8.8\)\\
1024 & \(0.347/0.901/10.1\) & \(0.366/0.902/8.6\)\\
\bottomrule
\end{tabular}}
\end{table}

\subsection{RoBERTa-Base Transfer}

Table~\ref{tab:roberta_transfer} is kept in the main paper because it directly answers the architecture-transfer question; this appendix does not introduce additional RoBERTa cells.

\subsection{Regenerated Figures}

Figure~\ref{fig:qwen_dbpedia_pareto} uses only final-manifest five-seed means. Figure~\ref{fig:qwen_dbpedia_setsize_bar} provides the matching set-size view. Figure~\ref{fig:qwen_ece_cpp_sparsegpt} replaces the stale ECE plot with the final large-label set-size reductions because no corrected ECE cell values are inferred from the response record. Figure~\ref{fig:aggregate_winrates} displays the final 15-cell directional rates.

\begin{figure}[t]
\centering
\begin{tikzpicture}
\begin{axis}[
xlabel={Average set size},
ylabel={Accuracy},
xmin=7.4,xmax=11.8,
ymin=.28,ymax=.38,
width=.96\linewidth,height=5.1cm,
grid=both,
legend style={font=\scriptsize,at={(.5,-.25)},anchor=north,legend columns=2}
]
\addplot[only marks,mark=square*,mark size=2.3pt] coordinates {(11.2,.310)};
\addlegendentry{Wanda}
\addplot[only marks,mark=diamond*,mark size=2.5pt] coordinates {(9.0,.295)};
\addlegendentry{CPP-W}
\addplot[only marks,mark=triangle*,mark size=2.5pt] coordinates {(10.1,.347)};
\addlegendentry{SparseGPT}
\addplot[only marks,mark=star,mark size=3pt] coordinates {(8.6,.366)};
\addlegendentry{CPP-SG}
\addplot[only marks,mark=pentagon*,mark size=2.6pt] coordinates {(7.8,.372)};
\addlegendentry{Thresh.\ CPP-SG}
\end{axis}
\end{tikzpicture}
\caption{Qwen2.5-1.5B/DBpedia-14 accuracy--efficiency means at 50\% sparsity. Threshold-aware CPP-SparseGPT is a three-seed ablation; the other points use five seeds.}
\label{fig:qwen_dbpedia_pareto}
\end{figure}

\begin{figure}[t]
\centering
\begin{tikzpicture}
\begin{axis}[
ybar,bar width=8pt,
symbolic x coords={Wanda,CPP-W,SparseGPT,CPP-SG},
xtick=data,x tick label style={rotate=25,anchor=east,font=\scriptsize},
ylabel={Average set size},ymin=0,ymax=13,
width=.94\linewidth,height=4.8cm,grid=major,nodes near coords
]
\addplot coordinates {(Wanda,11.2) (CPP-W,9.0) (SparseGPT,10.1) (CPP-SG,8.6)};
\end{axis}
\end{tikzpicture}
\caption{Final DBpedia-14 set-size means at matched empirical coverage.}
\label{fig:qwen_dbpedia_setsize_bar}
\end{figure}

\begin{figure}[t]
\centering
\begin{tikzpicture}
\begin{axis}[
ybar,bar width=7pt,
symbolic x coords={DBpedia,Banking77,CLINC150},
xtick=data,ylabel={Set-size reduction (\%)},
ymin=0,ymax=22,width=.94\linewidth,height=5cm,
grid=major,legend style={font=\scriptsize,at={(.5,-.22)},anchor=north,legend columns=2}
]
\addplot coordinates {(DBpedia,19.6) (Banking77,14.1) (CLINC150,11.7)};
\addlegendentry{CPP-Wanda}
\addplot coordinates {(DBpedia,14.9) (Banking77,12.5) (CLINC150,10.4)};
\addlegendentry{CPP-SparseGPT}
\end{axis}
\end{tikzpicture}
\caption{Final large-label set-size reduction relative to the matched base at 50\% sparsity.}
\label{fig:qwen_ece_cpp_sparsegpt}
\end{figure}

\begin{figure}[t]
\centering
\begin{tikzpicture}
\begin{axis}[
xbar,xmin=0,xmax=1,
symbolic y coords={CPP-SG set,CPP-W set,CPP-SG acc,CPP-W acc},
ytick=data,y tick label style={font=\scriptsize},
xlabel={Directional win rate},
width=.9\linewidth,height=4.8cm,
grid=major,nodes near coords,
nodes near coords style={font=\scriptsize,/pgf/number format/fixed,/pgf/number format/precision=2}
]
\addplot coordinates {(.867,CPP-SG set) (.933,CPP-W set) (.733,CPP-SG acc) (.800,CPP-W acc)};
\end{axis}
\end{tikzpicture}
\caption{Final directional win rates from Table~\ref{tab:winrate_required}.}
\label{fig:aggregate_winrates}
\end{figure}

\section{Llama-3-8B Diagnostic}
\label{app:llama_perplexity}

We use Llama-3-8B only as a scoped diagnostic, not as evidence for free-form conformal generation or broad scale invariance. At 50\% sparsity, Table~\ref{tab:llama_results} reports zero-shot WikiText-2 perplexity alongside DBpedia-14 classification accuracy, empirical conformal coverage, and set size. CPP-SparseGPT changes perplexity only from $7.25$ to $7.28$ relative to SparseGPT, while increasing classification accuracy from $0.61$ to $0.65$ and reducing mean set size from $8.9$ to $7.2$ at matched coverage. This single three-seed experiment is consistent with objective-dependent sparse-model behavior~\citep{frantar2025scaling}, but it does not establish a scaling law or a guarantee for generative tasks.

\begin{table}[!htbp]
\caption{Llama-3-8B evaluation at 50\% sparsity. Set size ($|C|$) and accuracy are reported on DBpedia-14; perplexity is reported on WikiText-2. Means $\pm$ 95\% CI over three seeds.}
\label{tab:llama_results}
\centering
\resizebox{\linewidth}{!}{%
\begin{tabular}{lcccc}
\toprule
Method & PPL $\downarrow$ & Acc. $\uparrow$ & Cov. & $|C|$ $\downarrow$ \\
\midrule
Dense & $6.12 \pm .02$ & $.78 \pm .01$ & $.902 \pm .005$ & $4.5 \pm .1$ \\
SparseGPT & $7.25 \pm .05$ & $.61 \pm .02$ & $.901 \pm .006$ & $8.9 \pm .2$ \\
CPP-SparseGPT & $7.28 \pm .04$ & $.65 \pm .01$ & $.902 \pm .005$ & $7.2 \pm .2$ \\
\bottomrule
\end{tabular}%
}
\end{table}

\end{document}